\DeclareMathOperator{\argmin}{argmin}
\DeclareMathOperator{\argmax}{argmax}
\newtheorem{definition}{Definition}[section]
\newcommand{\methodfullname}{{Single-Step Sample Erasure} \xspace}
\newcommand{\methodacronym}{{SSSE}\xspace}
\newtheorem{prop}{Proposition}
\DeclareMathOperator{\diag}{diag}
\DeclareMathOperator{\rank}{rank}
\begin{document}

\title{SSSE: Efficiently Erasing Samples from Trained \\ Machine Learning Models}

\author{Alexandra Peste\\
IST Austria\\
{\tt\small alexandra.peste@ist.ac.at}
\and
Dan Alistarh\\
IST Austria \\
{\tt\small dan.alistarh@ist.ac.at}
\and 
Christoph H. Lampert \\
IST Austria \\
{\tt\small chl@ist.ac.at}
}
\date{}
\maketitle

\begin{abstract}
   The availability of large amounts of user-provided data has been key to the success of machine learning for many real-world tasks. 
   Recently, an increasing awareness has emerged that users should be given more control about how their data is used. 
    In particular, users should have the right to prohibit the use of their data for training machine learning systems, and to have it \emph{erased} from already trained systems.
   While several \emph{sample erasure} methods have been proposed, all of them have drawbacks which have prevented them from gaining widespread adoption. 
   Most methods are either only applicable to very specific families of models, sacrifice too much of the original model's accuracy, or they have prohibitive memory or computational requirements. 

   In this paper, we propose an efficient and effective algorithm,
   SSSE, for samples erasure that is applicable to a wide class of machine learning models. 
   From a second-order analysis of the model's loss landscape we derive a closed-form update step of the model parameters that only requires access to the data to be erased, not to the original training set. 
   Experiments on three datasets, CelebFaces attributes (CelebA), Animals with Attributes 2 (AwA2) and CIFAR10, show that in certain cases SSSE can erase samples almost as well as the optimal, yet impractical, gold standard of training a new model from scratch with only the permitted data.
\end{abstract}

\section{Introduction}

One of the main reasons for the recent success of deep learning for many computer vision tasks is the availability of large user-provided datasets. 
For example, the popular ImageNet dataset consists of over 14 millions images that were publicly accessible on the Internet~\cite{deng2009imagenet}. 
More recently, Facebook disclosed the existence of an in-house dataset which consists of 3.5 billion Instagram images~\cite{mahajan2018exploring}.

For a long time, when users uploaded their data to online services they silently agreed on transferring a broad range of usage rights to the service. 
Recently, however, this arrangement has come under scrutiny, 
with the growing awareness in our society that user data is valuable, 
that it contains potentially sensitive information, and that online services might use it for purposes the users disagree with. 
Thus, users need to be given more control over how their personal data is used. In recent years, several legal initiatives have been proposed to address this concern. For example, the European Union's General Data Protection Regulation (GDPR) \cite{mantelero2013}  establishes the users' right to withdraw their data and the associated usage rights from the online services at any time. 
While the legal consequences of such a requirement are so far unclear when it comes to machine learning models, it is a realistic possibility that it would imply that the withdrawn data also has to be erased from already-trained models.

Despite the fundamental nature of the problem of erasing certain training examples from an already trained machine learning model, no satisfactory general-purpose solution exists so far. 
There is, of course, one simple solution to the data erasure problem: to simply train a new model on all training data except the withdrawn part.
Unfortunately, this gold standard requires storing and reprocessing all training data whenever a single example should be erased. It is therefore neither practical nor economically viable for real-world models that are often trained over weeks or months from millions or billions of training examples. 
Instead, a number of alternative approaches have been proposed, see Section~\ref{sec:relatedwork} for an overview. 
It is apparent, though, that none of them has found widespread adoption, either in the research community or in commercial applications. 
A possible explanation is that the proposed methods are either not efficient enough to be practical, or not powerful enough to provide satisfactory results.

In this work, we aim at closing that gap, by introducing a new method which we call \emph{\methodfullname (\methodacronym)}. We study the scenario of updating a trained model with parameters $\theta^\star$ on a dataset $\mathcal{D}$, to reflect the removal of a subset of $k$ training samples $\mathcal{S}$. 
By using a first-order Taylor expansion for the gradient of the leave-k-out loss function around the optimizer $\theta^\star$, we obtain a Newton-type update for the model.
This update is inspired from the definition of influence functions \cite{koh2017understanding} and has also been previously studied for convex models in \cite{guo2019certified}.
In comparison to previous work, we address the computational complexity of inverting high-dimensional matrices, by approximating the Hessian with the empirical Fisher Information Matrix (FIM); the latter has the advantage of allowing easy computation with one gradient per data point, and of enabling fast matrix inversion using rank-one updates. 

A similar approach to ours is the ``scrubbing update'' introduced in \cite{golatkar2020eternal}. Differently from \methodacronym, the scrubbing update uses an additional noise term and only the diagonal of the FIM.  
However, we have found through our experiments that the non-diagonal terms of FIM have an important contribution and ignoring them negatively affects the performance of the updated model.

We emphasize that \methodacronym can be used for both convex and non-convex models, as well as for removing a single sample or entire subsets from the training set. Moreover, \methodacronym is broadly applicable to a wide variety of existing machine learning models, as it puts only minor restriction on \emph{how} the original model was trained. We note, however, that the convex setting has better theoretical guarantees, due to the existence of an unique optimum. Additionally, \methodacronym is sensitive to the number of samples being removed, as this affects the quality of the Taylor approximation it relies on. 

Furthermore, we introduce two evaluation methods for sample removal in the (multi-attribute) binary and multinomial classification settings, respectively. Apart from providing additional insight (compared to standard accuracy metrics) into the extent at which samples are removed from a trained model, the proposed evaluation methods also serve as selection criteria for the hyper-parameters of the \methodacronym update. 

In addition to being analytically justified by means of a Taylor expansion of the model's loss landscape, \methodacronym is also deterministic. Consequently, it is easy to apply and understand, including for practitioners, allowing them to, for example, check how far their model of choice fulfills  \methodacronym's underlying assumptions. Moreover, \methodacronym has an important practical advantage, as it only requires access to the data to be deleted, not the original training set, and is also efficient, as samples are erased by simple closed-form updates of the model parameters. 

Before formally defining our proposed \methodacronym method in Section \ref{sec:method}, we first provide an overview of the related work. Subsequently, in Section \ref{sec:experiments} we test the ability of \methodacronym to remove groups of samples from two convex classification models on CelebA \cite{liu2015faceattributes} and Animals with Attributes (AwA2) \cite{xianCVPR17} datasets, as well from a ResNet20 \cite{he2016deep} model trained on CIFAR10 dataset \cite{krizhevsky2009learning}. We also show that the sample removal evaluation methods introduced in Section~\ref{sec:eval-methods} are consistent with more traditional metrics, such as accuracy.

\section{Related Work}\label{sec:relatedwork}
Machine learning traditionally aims at developing methods that absorb a maximal amount of information from a given training set. That a model could forget about parts of the data was mostly seen as a nuisance, e.g. in the context of \emph{catastrophic forgetting}~\cite{mccloskey1989catastrophic,toneva2019empirical}, not as a potentially useful feature. 
Eventually, however, inspired by the concept of  \emph{incremental learning}~\cite{syed1999}, 
where models are trained by sequentially adding 
more and more data, the idea of \emph{decremental learning}~\cite{cauwenberghs01} emerged, in which 
new models are formed by removing samples rather than adding them.  
At that time, the underlying motivation was not data privacy, but a wish to accelerate model selection methods, such as cross-validation, for which models must be trained and evaluated repeatedly on different subsets of the training set~\cite{DudaHartStork01}.
It was shown that decremental learning is possible for a number of classic machine learning models, including \emph{Naive Bayes}~\cite{schelter2020}, \emph{least-squares regression}~\cite{hansen1996}, 
and some variants of \emph{support vector  machines}~\cite{yang2010,tsai2014,karasuyama2010}.
The methods are not \emph{generic} though, but tailored to certain model classes.

More recently, a parallel branch of research emerged under the name \emph{machine unlearning} that concentrated procedures and data structures for accelerating the (re)training of models from subsets of the training data~\cite{bourtoule2021machine,cao15,du2019lifelong,ginart2019,izzo2020, liu2020, wu2020}. 
These efforts were typically limited to specific model classes, or they invoked substantial changes to the original training step, e.g. storing the model's parameter gradients for all training examples. 

The task of manipulating an already trained model to \emph{forget} parts of the training 
data has emerged anew in the context of data privacy 
or \emph{the right to be forgotten}~\cite{villaronga2018}. 
Most existing approaches can be broadly categorized into two groups~\cite{shintre2019}: \emph{statistical} or \emph{analytical}.

Statistical approaches use concepts from probability theory, such as differential privacy~\cite{dwork2014} or vanishing mutual information~\cite{cover2006elements}, to derive criteria and algorithms for successful data erasure.
Both criteria, differential privacy and vanishing mutual information, are quite strict, and thus the resulting methods can provide strong guarantees. 
However, attempts to enforce them typically come with 
a number of drawbacks. In particular, stochasticity 
of the model and the process are unavoidable. 
Consequently, statistical methods cannot be deterministic and typically include a step of parameter randomization, e.g. adding noise~\cite{golatkar2020eternal,golatkar2020forgetting,guo2019certified}. This reduces the  interpretability and reproducibility of results and comes with the danger of drastically reducing the prediction accuracy. 
The need for a non-standard randomized loss function at training time can also be expected to reduce the chance of wide-spread adoption by practitioners.  
Similar problems emerge also for cryptographic protocols, such as~\cite{garg2020}. 

Analytical approaches characterize the removal of data from a trained model by means of properties of the model's loss landscape. 
A prominent example is the use of \emph{influence functions}~\cite{koh2017understanding} to quantify the importance of each training sample to the overall model and to then derive mechanisms for eliminating the influence of the data that should be erased. The analytical approach can lead to methods that are both generic and deterministic and, therefore, \methodacronym also follows this path. 
A major challenge is computational tractability though. Determining the influence of any sample requires computing and inverting the Hessian matrix of the model's loss landscape. Such second-order operations are of prohibitive computational cost for modern high-dimensional models.
Consequently, efficient influence-based data removal has so far only been demonstrated for low-dimensional models~\cite{guo2019certified}. For high-dimensional ones, it is typically discussed only as a theoretical option but then either strong simplifications or more efficient heuristic alternatives are employed.~\cite{izzo2020}.

In this work, we prove this viewpoint to be incorrect. 
We demonstrate that influence-based sample erasure can 
be made tractable even for models that are of realistic 
size and were trained using state-of-the-art 
gradient-based optimization methods. 
Specifically, we circumvent the intractability of 
the inverse of the Hessian matrix by approximating 
it with the inverse of the empirical Fisher information
matrix (FIM). This is possible at the expense of only a single additional scaling parameter,
and subsequently, whenever samples are meant to be 
removed, FIM can be updated efficiently by 
low-rank matrix updates using the Sherman-Morrison 
lemma~\cite{sherman1950adjustment}.
This approach has recently been shown to be effective 
even for large-scale models in the context of 
neural network pruning \cite{singh2020woodfisher}.

Most related to our work on a technical level 
is~\cite{golatkar2020eternal}. The authors also
discuss a second-order update step for sample removal
and suggest using the FIM in place of the Hessian matrix.
Apart from this, the work is quite different from 
ours, though. In particular, the authors introduce 
an additional noise term to ensure that the updated model has a similar statistical behaviour to that 
of one trained from scratch without the removed 
samples. 
Furthermore, due to computational and memory constraints, the authors ultimately use only the noise term to update their model with the additional simplification of approximating the 
FIM by its diagonal. 
In our experience, such an approximation does not capture the real structure of the FIM well, and our experiments in Section~\ref{sec:experiments} confirm these concerns. 

Another important problem related to sample removal methods concerns the lack of consensus regarding the expected behavior of a model after the removal of certain training samples. Most existing methods use standard evaluation metrics, such as the $\ell_2$ distance in the space of parameters, or the accuracy on different data splits. However, these metrics cannot reliably confirm at which extent the targeted samples were removed from the trained model. For example, the $\ell_2$ distance between the updated model and retraining from scratch is not informative in the non-convex case, due to the existence of multiple local minima, and even for convex problems it is not a reliable metric for high dimensional spaces. Furthermore, the accuracy metric might not be able to distinguish between actual sample removal and scenarios when, for example, the connections responsible for predicting a certain class are simply ignored or removed. To address these problems, several research directions have been studied. One of them concerns \emph{membership inference attacks} \cite{shokri2017}, where the goal is to determine whether a given sample was part of the original training set of a model. This is achieved by training multiple ``shadow models'' which mimic the predictions of the original model. More recently, other types of metrics have been proposed, for example the \emph{feature injection test} \cite{izzo2020}, where an extra feature tightly correlated with the target value is introduced, to incur a strong response in the model, before and after the deletion of certain groups of samples. However, this metric has only been used in limited settings, such as linear regression, and it is not clear whether it would result in a meaningful behavior for other models. In this work, we introduce two new evaluation methods for characterizing the removal of samples from a model. The methods we propose look at the similarity between the updated and the retrained from scratch models on the removed samples, and we have found that their behavior correlates well with more traditional evaluation methods, such as accuracy.     

In the following section, we provide a formal 
definition of the problem setting and derive our 
proposed \methodacronym method. 
For the sake of completeness, we also include a
short derivation of the influence-based model 
updates for sample erasure. 

\section{Method}
\label{sec:method}

\subsection{Background}
Consider the setting of supervised learning using a dataset $\mathcal{D} = \{(x_1, y_1), \ldots, (x_n, y_n)\} \subseteq \mathcal{X}\times \mathcal{Y}$.
Let $\mathcal{H}$ be the hypothesis space, containing functions $h_\theta: \mathcal{X} \rightarrow \mathcal{Y}$, twice differentiable in the parameters $\theta \in \mathbb{R}^d$. Consider a twice differentiable loss function $\ell : \mathbb{R}^d \times \mathcal{Y} \rightarrow [0, \infty)$, and for every $i \in \{1, \ldots n\}$, let  $\ell_i(\theta) := \ell(h_\theta(x_i), y_i)$. We incorporate into $\ell$ all regularization terms that are used to optimize the model.
Let $\theta^\star = \argmin_{\theta} L(\theta)$, where
\begin{equation}
    L(\theta) = \frac{1}{n} \sum_{i=1}^n \ell_i(\theta)
\end{equation}

In the following derivation, we assume for simplicity that $L$ is a strictly convex function of $\theta$, such that there exists an unique global minimum $\theta^\star$. We pose the following problem: given a subset $\mathcal{S} \subset \mathcal{D}$, with $|\mathcal{S}| = k$, update $\theta^\star$ in a single step, such that the new model behaves as if it had been trained from scratch on $\mathcal{D}\setminus \mathcal{S}$.

Let $\theta^\star_{\mathcal{-S}} = \argmin_\theta L_{-\mathcal{S}}(\theta)$, where
\begin{equation}
    L_{-\mathcal{S}}(\theta) = \frac{1}{n-k} \sum_{(x_i, y_i) \notin \mathcal{S}} \ell_i(\theta)
\end{equation}
By using a first-order Taylor approximation of $\nabla L_{-\mathcal{S}}(\theta^\star_{-\mathcal{S}})=0$ around $\theta^\star$, we obtain
\begin{equation}
    \nabla L_{-\mathcal{S}}(\theta^\star) + H_{-\mathcal{S}}(\theta^\star)(\theta^\star_{-\mathcal{S}} - \theta^\star) \approx 0
\end{equation}
 
\noindent where $H_{-\mathcal{S}}(\theta^\star) = \nabla^2 L_{-\mathcal{S}}(\theta^\star)$. Since $\theta^\star$ is a minimizer for $L(\theta)$, from the first-order optimality condition we have $\nabla L_{-\mathcal{S}}(\theta^\star) = - \frac{1}{n-k} \sum_{(x_i, y_i) \in \mathcal{S}} \nabla \ell_i(\theta^\star)$. This gives the following approximation for $\theta^\star_{-\mathcal{S}}$:
\begin{equation}
    \theta^\star_{-\mathcal{S}} \approx \theta^\star + \frac{1}{n-k} H^{-1}_{-\mathcal{S}}(\theta^\star) \sum_{(x_i, y_i) \in \mathcal{S}} \nabla \ell_i(\theta^\star)
\label{eqn:hess-update}
\end{equation}

This update step has been discussed previously in the literature as a desirable (but often computationally intractable) approach to sample erasure. 
For example, it appears for this purpose for convex models in~\cite{guo2019certified}, 
where in addition a change to the loss function through random perturbations is proposed. 
The update rule~\eqref{eqn:hess-update} also appears in the context of computing influence functions \cite{cook1980characterizations, koh2017understanding}, except with $H_{\mathcal{D}}(\theta^\star)$, the Hessian over the full training set, used in place of $H_{-\mathcal{S}}$. We refer to the update which uses $H_\mathcal{D}(\theta^\star)$ as the \emph{influence function update}.

\subsection{\methodfullname}

For most practical applications, computing and inverting the Hessian is prohibitively expensive. 
However, when the loss function is the negative log likelihood (i.e. $\ell_i(\theta) = -\log p(y_i|x_i; \theta)$), it is well-known that the expected Hessian over $x \sim p(x)$ and $y\sim p(y|x;\theta)$ denoted by $-\mathbb{E}_{(x, y)}[\nabla^2 \log p(y | x; \theta)]$ is equal to the Fisher Information Matrix (FIM): 
\begin{equation}
    F(\theta) = \mathbb{E}_{(x, y)}[\nabla \log p(y | x; \theta) \cdot \nabla^T \log p(y | x;\theta)]
\end{equation}
\noindent When the discriminative model represented by $p(y |x; \theta)$ is a good approximation for the true conditional distribution $p(y | x)$, we obtain that the Hessian matrix computed over the training set $\mathcal{D}$ can be approximated with an unbiased estimator of the FIM. 
However, the use of FIM requires multiple gradient computations for each data sample, which is impractical for large models and datasets. Instead, a good compromise is the use of the \emph{empirical FIM}, which requires only a single gradient computation per sample, by using the true label of each data point instead of sampling from $p(y|x;\theta)$ multiple times. The empirical FIM is defined as:
\begin{equation}
    \hat{F}_{\mathcal{D}}(\theta) = \frac{1}{n} \sum_{i=1}^n \nabla \log p(y_i | x_i; \theta) \cdot \nabla^T \log p(y_i | x_i; \theta)
\end{equation}

Besides tractability, a major advantage of approximating the Hessian with the empirical FIM is that it allows us to efficiently compute its inverse without having to perform an explicit matrix inversion. This observation has been explored in~\cite{singh2020woodfisher} in the context of pruning neural networks, and we follow here a similar approach.
Namely, we construct $\hat{F}_{\mathcal{D}}^{-1}$ incrementally by a sequence of rank-1 updates, by employing the Sherman-Morrison lemma~\cite{sherman1950adjustment}.
Specifically, we start by choosing $\lambda > 0$ and by defining $\hat{F}_{0}(\theta) = \lambda I_{d}$. 
Afterwards, we incrementally add the contribution of all  $(x_i,y_i)\in\mathcal{D}$.
Namely, by iterating over $i=1,\dots,n$, 
in each step we compute 

\begin{equation}
    \hat{F}^{-1}_{i} = \hat{F}^{-1}_{i-1} - \frac{\hat{F}^{-1}_{i-1} \nabla \ell_{i} \cdot \nabla^T \ell_{i} \hat{F}^{-1}_{i-1}}{n + \nabla^T \ell_{i} \hat{F}^{-1}_{i-1} \nabla \ell_{i},}
    \label{eqn:invFIM}
\end{equation}

\noindent where all gradients and estimates are taken at the fixed $\theta$. 
The final $\hat{F}_n^{-1}$ is identical to the desired  $\hat{F}^{-1}_{\mathcal{D}}$.

The value of the dampening factor $\lambda > 0$ used for initializing the recurrence for the empirical FIM ensures that the matrix is invertible. 
In practice, this is treated as a hyperparameter, but when using $\ell_2$ regularization to train the model, one valid choice for $\lambda$ is the strength of the regularizer. 

Therefore, we propose approximating the Hessian matrix used in the update step from Equation~\eqref{eqn:hess-update} with the empirical FIM. Additionally, we approximate the empirical FIM computed on $\mathcal{D}\setminus \mathcal{S}$ with $\hat{F}_{\mathcal{D}}(\theta)$ and use the latter instead. We noticed, however, that the elements of the empirical FIM usually have a different scale than those of the Hessian. To account for this difference, we add an additional scale hyper-parameter $\epsilon$. In Section~\ref{sec:eval-methods} we describe measures that can be used to select the optimal value of $\epsilon$. Based on these approximations, for any given $\epsilon>0$, we define a \emph{samples erasure} update for a removed subset $\mathcal{S}$ as:
\begin{definition}[\methodfullname Update (\methodacronym)]
    \begin{equation}
    \hat{\theta}_{\epsilon} = \theta^\star + \frac{\epsilon}{n-k} \hat{F}^{-1}_{\mathcal{D}}(\theta^\star)\cdot \sum_{(x_i, y_i)\in \mathcal{S}}\nabla \ell_i(\theta^\star).
    \label{eqn:sample-erasure}
    \end{equation}
\end{definition}

\paragraph{Space and Computational Complexity.} The method proposed above only requires access to the samples that are removed, and not the full train data, as the inverse of the empirical FIM (i.e. $\hat{F}^{-1}_\mathcal{D}(\theta^\star)$) can be computed once and stored for later use. We can additionally make \methodacronym more efficient for large-scale models by assuming a block-diagonal structure of the empirical FIM, or by using mini-batches of gradients (instead of single sample gradients) for the FIM estimation, as demonstrated in~\cite{singh2020woodfisher}.
Specifically, if we employ a block-diagonal structure with blocks of size $B$ and total dimension $d$, and iterate over $n$ samples, the computational complexity of the full iteration is $O(ndB)$, using memory of size $O(dB)$. As shown in our experiments, this allows us to consider fairly large models and sample sets. 

\paragraph{Illustrative Example.} In what follows, we study the decision boundary learned by the \methodacronym update on a simple binary convex classification task, when removing a single sample or a small group of samples. For completeness, we also compare against the influence functions update (see Equation~\ref{eqn:hess-update}), where the Hessian matrix is used instead of FIM. Moreover, for both the influence and \methodacronym updates, we differentiate between using the empirical FIM (or Hessian) computed over all training samples, versus the empirical FIM (or Hessian) on the ``leave-k-out'' (LKO) dataset. As can be seen in Figure~\ref{fig:ex-ssse}, with the right scaling factor $\epsilon$, the decision boundary learned by \methodacronym is similar to the one learned using the Hessian matrix. However, for both methods using the LKO dataset in the FIM or Hessian computation has a significant impact, bringing the decision boundary closer to the one obtained with retraining from scratch.
\begin{figure*}[t]
	\centering
	\begin{subfigure}[t]{0.5\textwidth}
		\centering
		\includegraphics[height=1.4in]{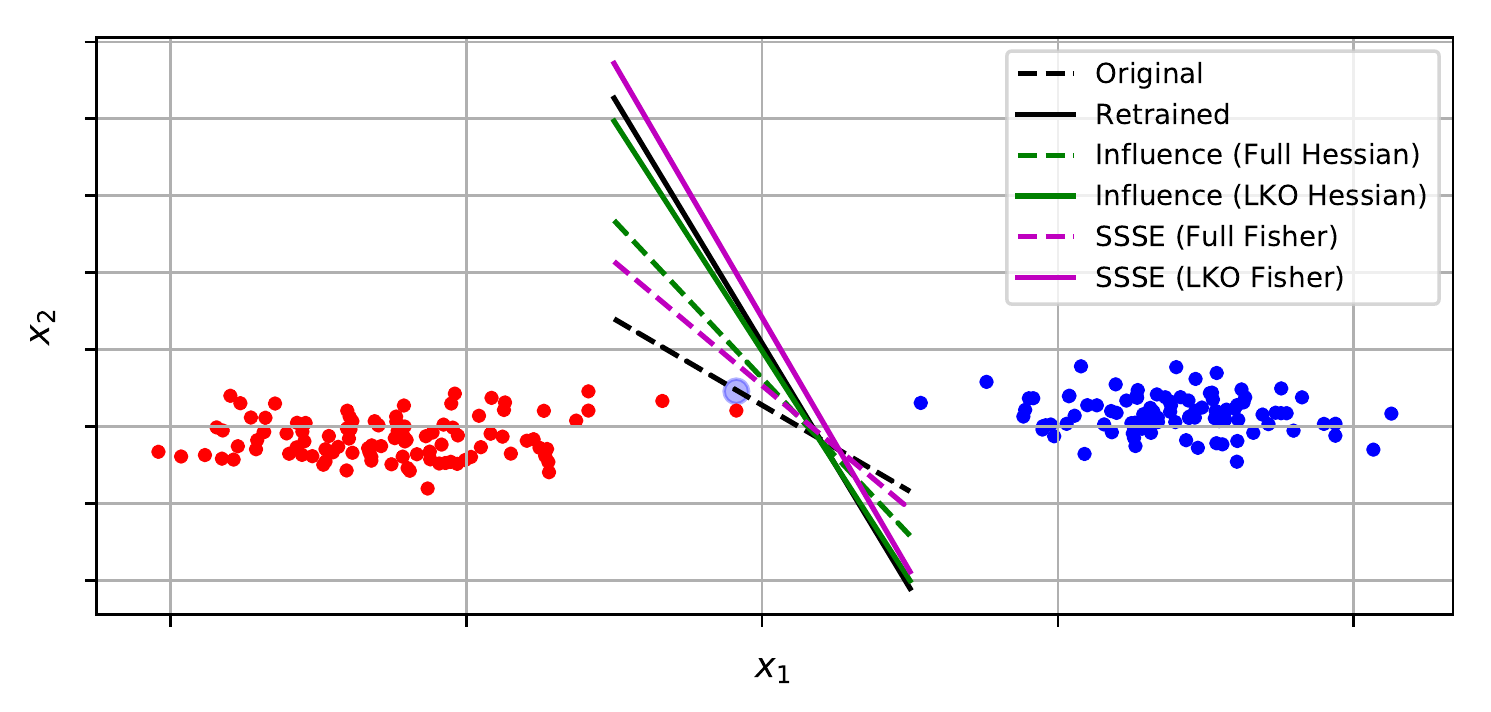}
		\caption{Decision boundary after removing a single sample}
		\label{fig:ex-sample}
	\end{subfigure}%
	~
	\begin{subfigure}[t]{0.5\textwidth}
		\centering
		\includegraphics[height=1.4in]{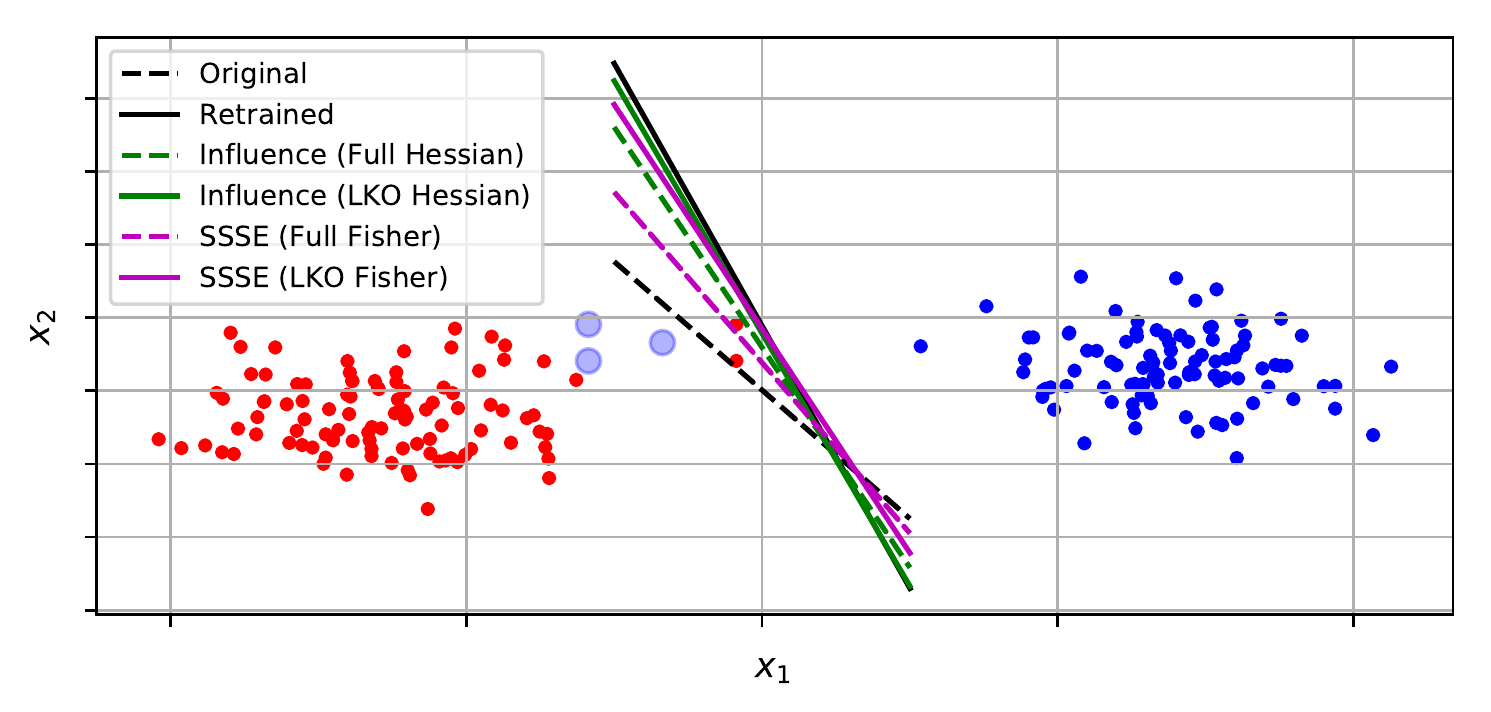}
		\caption{Decision boundary after removing multiple samples}
		\label{fig:ex-multi-sample}
	\end{subfigure}%
	
	\caption{The decision boundary learned with \methodacronym and the influence function update versus the original model and retraining from scratch, when removing a single or multiple samples. We also differentiate between using the Fisher (or Hessian) on the LKO dataset instead of the full training set. The removed samples are represented by the larger transparent blue dots.}
	\label{fig:ex-ssse}
\end{figure*}

\paragraph{Analyzing the Approximation Between Hessian and Empirical FIM.} The \methodacronym update we previously introduced relies on the assumption that the Hessian of the loss function can be approximated using the empirical FIM. In what follows, we will study an example of a multinomial classification problem using linear models, where the Hessian is approximately proportional to the empirical FIM. Namely, we consider a linear model with parameters $\theta$, $h_\theta(x) = \theta \cdot x$, where $\theta\in \mathbb{R}^{c \times m}$, $x\in \mathbb{R}^m$ and $p\in [0, 1]^c$ is the vector of probabilities for each class, obtained after applying a softmax transformation to $h_\theta(x)$.     

\begin{prop}
Let $\theta \in \mathbb{R}^{c\times m}$ be the parameters of a linear model, with $m > c$ and $\rank{\theta}=c$. For any training sample $x$, consider $p\in[0, 1]^c$ the output probabilities per class. Assume that exists $\epsilon\approx 0$, such that for any input label $y\in \{1, 2\ldots c\}$, we have $p_y= 1 - (c-1)\epsilon$ and $p_j = \epsilon$, for $j\neq y$.

Then, if $c\rightarrow \infty$, such that $c\epsilon \approx 0$, we have $H_\mathcal{D}(\theta) \approx \frac{1}{\epsilon(c-1)}\hat{F}_\mathcal{D}$.
\label{prop:hess-fim}
\end{prop}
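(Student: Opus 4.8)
The plan is to reduce the claimed matrix proportionality to a per-sample, per-class computation and then match leading-order terms. First I would recall the standard form of the softmax cross-entropy derivatives: writing $z = \theta x$ for the logits and $p = \mathrm{softmax}(z)$, the per-sample gradient with respect to $\theta$ is $\nabla \ell = (p - e_y)\otimes x$, where $e_y$ is the one-hot vector of the true label, and the per-sample Hessian is $\nabla^2 \ell = (\diag(p) - pp^\top)\otimes (x x^\top)$. Consequently the per-sample contribution to the empirical FIM is $\nabla\ell\,\nabla^\top\ell = \big[(p-e_y)(p-e_y)^\top\big]\otimes(xx^\top)$. Both objects carry the common Kronecker factor $xx^\top$, so it suffices to establish the proportionality between the two $c\times c$ matrices $A := \diag(p) - pp^\top$ and $B := (p-e_y)(p-e_y)^\top$; the statement for $H_{\mathcal D}$ and $\hat F_{\mathcal D}$ then follows by averaging over the dataset, since the scalar $\tfrac{1}{\epsilon(c-1)}$ is the same for every sample.

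Next I would substitute the assumed form of the probability vector. The hypothesis $p_y = 1-(c-1)\epsilon$, $p_j = \epsilon$ can be written compactly as $p = \epsilon\,\mathbf{1} + (1-c\epsilon)\,e_y$, and correspondingly $p - e_y = \epsilon(\mathbf{1} - c\,e_y)$. Expanding both matrices in the spanning set $\{I,\ e_y e_y^\top,\ \mathbf{1} e_y^\top + e_y \mathbf{1}^\top,\ \mathbf{1}\mathbf{1}^\top\}$ and collecting terms gives $A = \epsilon I + c\epsilon(1-c\epsilon)\,e_y e_y^\top - \epsilon(1-c\epsilon)(\mathbf{1} e_y^\top + e_y \mathbf{1}^\top) - \epsilon^2\,\mathbf{1}\mathbf{1}^\top$ and $\tfrac{1}{\epsilon(c-1)}B = \tfrac{c^2\epsilon}{c-1}\,e_y e_y^\top - \tfrac{c\epsilon}{c-1}(\mathbf{1} e_y^\top + e_y\mathbf{1}^\top) + \tfrac{\epsilon}{c-1}\,\mathbf{1}\mathbf{1}^\top$. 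I would then compare coefficients in the regime $c\to\infty$, $c\epsilon\to 0$: the dominant $e_y e_y^\top$ coefficients are $c\epsilon(1-c\epsilon)$ and $\tfrac{c^2\epsilon}{c-1}$, both equal to $c\epsilon$ to leading order, and the off-diagonal $\mathbf{1} e_y^\top + e_y\mathbf{1}^\top$ coefficients are $-\epsilon(1-c\epsilon)$ and $-\tfrac{c\epsilon}{c-1}$, both tending to $-\epsilon$. Hence the two matrices agree on the entire row and column indexed by the true label, which carries the dominant structure.

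The main obstacle is that the agreement is \emph{not} uniform entrywise: the Hessian retains an $\epsilon I$ term, i.e. diagonal entries of order $\epsilon$ on the classes $j\neq y$, whereas the scaled FIM contributes only order $\tfrac{\epsilon}{c-1}$ there, so these blocks genuinely differ. The resolution is to interpret the proportionality at leading order \emph{relative} to the dominant entry: the mismatched entries are $O(\epsilon)$, smaller by a factor $\tfrac{1}{c-1}$ than the dominant $(y,y)$ coefficient of order $c\epsilon$, and therefore vanish in relative terms as $c\to\infty$. I would make this precise through the dominant-direction ratio $\tfrac{e_y^\top A\, e_y}{e_y^\top B\, e_y} = \tfrac{p_y(1-p_y)}{(c-1)^2\epsilon^2} = \tfrac{p_y}{(c-1)\epsilon} \to \tfrac{1}{(c-1)\epsilon}$, which pins the constant down exactly, and equivalently by bounding $\big\lVert A - \tfrac{1}{\epsilon(c-1)}B\big\rVert$ against $\lVert A\rVert$ and showing the ratio is $O(1/c)$. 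Finally I would lift the per-sample identity back to $H_{\mathcal D}$ and $\hat F_{\mathcal D}$ by linearity of the averages that define them, which is immediate once the per-sample proportionality holds with the sample-independent constant $\tfrac{1}{\epsilon(c-1)}$.
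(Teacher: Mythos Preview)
Your proposal is correct and follows the same high-level strategy as the paper: reduce via the Kronecker structure to the $c\times c$ matrices $A=\diag(p)-pp^\top$ and $B=(p-e_y)(p-e_y)^\top$, then argue that $A\approx\tfrac{1}{\epsilon(c-1)}B$ under the stated assumptions, and finally average over the dataset. Where you differ is in how you carry out the $c\times c$ comparison. The paper proceeds spectrally: it invokes an external result (Theorem~4 of \cite{singla2019understanding}) to identify the dominant eigenpair of $A$ as $(c\epsilon,\,u)$ with $u=\tfrac{1}{\sqrt{c(c-1)}}(1-c,1,\dots,1)$, and then observes that $p-e_y=\epsilon\sqrt{c(c-1)}\,u$, so $B$ is exactly a scalar multiple of $uu^\top$ and the constant drops out. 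You instead expand both $A$ and $\tfrac{1}{\epsilon(c-1)}B$ explicitly in the basis $\{I,\,e_ye_y^\top,\,\mathbf{1}e_y^\top+e_y\mathbf{1}^\top,\,\mathbf{1}\mathbf{1}^\top\}$ and match coefficients term by term. Your route is more elementary and self-contained (no appeal to an external eigenvector lemma), and it makes the residual $\epsilon I$ term and its $O(1/c)$ relative size completely explicit, whereas the paper simply absorbs that term into the phrase ``after ignoring $\epsilon^2$ terms.'' The two arguments are equivalent in content: your expansion is precisely what one gets by writing out $c\epsilon\,uu^\top$ in coordinates.
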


\begin{proof}{(Sketch)} For simplicity, we consider the Hessian $H_x=\nabla^2 \ell_x(\theta)$ and the empirical FIM $F_x=\nabla \ell_x(\theta) \cdot \nabla^T \ell_x(\theta)$ for a single sample $x$. Via direct calculation, we obtain that:
\begin{align*}
    H_x = (\diag(p) - p p^T) \otimes x x^T \\
    F_x = (p - y){(p-y)}^T \otimes x x^T ,
\end{align*}
where $\otimes$ denotes the Kronecker product. 
Following \cite{singla2019understanding}, the matrix $A = \diag(p) - p p^T \in \mathbb{R}^{c\times c}$ is positive definite (see proof of Theorem 2 from \cite{singla2019understanding}). Assume, without loss of generality, that the one-hot encoding of the label for sample x is $y=(1, 0, \ldots, 0)$. Thus, $p=(1 - (c-1)\epsilon, \epsilon, \ldots, \epsilon)$. 

Using the steps from the proof of Theorem 4 \cite{singla2019understanding}, we can approximate $A \approx c\epsilon u u^T$, where $u\in \mathbb{R}^c$, $u = \frac{1}{\sqrt{c(c-1)}}(1 - c, 1, \ldots 1)$ is the corresponding eigenvector for eigenvalue $c\epsilon$ of A (after ignoring $\epsilon^2$ terms).

Therefore, $H_x = A \otimes x x^T \approx c\epsilon (uu^T) \otimes (xx^T$). Since $p = (1 - (c-1)\epsilon, \epsilon, \ldots, \epsilon)$, it is easy to see that $p - y = \epsilon (1-c, 1, \ldots, 1) = \epsilon \sqrt{c(c-1)} u$, from which $H_x \approx \frac{1}{\epsilon (c-1)} F_x$. The conclusion follows after summing over all training samples.

\end{proof}

\paragraph{Discussion.} We briefly comment on the assumption in Proposition~\ref{prop:hess-fim} regarding the existence of $\epsilon$ such that for all training samples $x$, $p_y = 1 - (c-1)\epsilon$, $p_j=\epsilon$, for $j\neq y$. One example when this assumption would hold is when the data is linearly separable, and all data within a class lies on a $m - c$-dimensional subspace. 

In the case of binary linear classification, a similar result follows trivially, since $H_x = p(1-p) xx^T$ and $F_x = (p-y)^2 xx^T$. Assuming there exists $\epsilon > 0$ such that $|p-y|=\epsilon$ for any sample $(x, y) \in \mathcal{D}$ (i.e. the data lies on two parallel $m-1$--dimensional hyperplanes), we obtain $H = \frac{\epsilon}{1 - \epsilon} F$.
\newline

In the following section, we propose two evaluation methods designed to capture the similarity between \methodacronym and the gold standard of retraining from scratch; moreover, these methods can be used to select the best value of the \methodacronym scale hyper-parameter $\epsilon$. Our experimental analysis on several datasets, on two convex and one non-convex classification tasks, confirms that \methodacronym can be used to erase samples from trained models.

\section{Evaluation Methods}
\label{sec:eval-methods}

In this section we propose several methods for evaluating whether an erasure update indeed fulfilled its purpose of removing samples from a trained model. We focus on convex classification tasks, as the existence of an unique global minimum makes the comparison with retraining from scratch easier. 

We present two evaluation methods, one for the (multi-attribute) binary classification setting, and a second one for multinomial classification. In the following section, we show that these evaluation methods are consistent with the more conventional metrics such as distance in parameter space and accuracy, on the CelebA dataset for multi-attribute binary classification, and on AwA2 for multinomial classification, respectively.

Our motivation for developing these evaluation methods is twofold. First, there is clearly a need in the literature for evaluation methods that can reliably confirm whether a sample is still present in a model. Secondly, we need a method for selecting the best value of the scaling hyper-parameter $\epsilon$ required for the \methodacronym method proposed in Section~\ref{sec:method}. Furthermore, the selection of $\epsilon$ should be done without the risk of over-fitting. 

Intuitively, for any method that aims at removing samples, the largest difference in the behaviour of the updated model is expected to be seen on the set $\mathcal{S}$ of removed samples. For this reason, we define our evaluation methods on the set $\mathcal{S}$; this choice has the additional advantage of reducing the risk of over-fitting when searching for the scaling hyper-parameter $\epsilon$ for \methodacronym.  

\subsection{Multi-Attribute Binary Classification}

A similar behavior between the \methodacronym update $\hat{\theta}_\epsilon$ and the retrained model $\theta^\star_{-\mathcal{S}}$ would be reflected in similar performance on the remaining train set, as well as on the test set, while the removed samples $\mathcal{S}$ should be treated as test samples by both models. In what follows, we develop our evaluation method for the multi-attribute binary classification setting based on this intuition.

Let $A$ be the set of all attributes and let $a \in A$ be a randomly chosen attribute that we wish to (partially) remove from the train set. For the removed train set $\mathcal{S}$, with $|\mathcal{S}| = k$, we refer to $\mathcal{D} \setminus \mathcal{S}$ as the ``leave-k-out train set''. Similarly, if $\mathcal{T}$ is the available test set, we group all samples with a positive attribute $a$ into a subset $\mathcal{T}_a$, called ``removed test set", while $\mathcal{T}_{-a} = \mathcal{T}\setminus \mathcal{T}_{a}$ will be referred to as the ``leave-k-out test set''. We will use the same notations to denote the corresponding data splits in the multi-class setting. Ideally, we want the performances of \methodacronym and retraining to match on all four data splits.

Since we are interested in quantifying how well \methodacronym achieves the goal of removing $\mathcal{S}$, we define a similarity measure between $\theta^\star$ and the retrained model $\theta^\star_{-\mathcal{S}}$, on a particular set of samples. In general, given an attribute $i \in A$ and a model with parameters $\theta$, we define $\alpha^{\mathcal{C}}_i(\theta)$ to be the area-under-the-curve (AUC) score corresponding to the receiver operating characteristic (ROC) curve, computed on a set of samples $\mathcal{C}$. We define the \emph{performance similarity} between two models $\theta_1$ and $\theta_2$ on a set $\mathcal{C}$ as the $\ell_1$ distance between the AUC scores over all attributes:

\begin{definition}[Performance similarity]
    \begin{equation}
    \mathbb{D}_{\mathcal{C}}(\theta_1, \theta_2) = \sum_{i=1}^{|A|} | \alpha^{\mathcal{C}}_i(\theta_1) - \alpha^{\mathcal{C}}_i(\theta_2)|
    \end{equation}
    \label{eqn:auc_l1}
\vspace{-\baselineskip}
\end{definition}

\noindent If an attribute is absent, its corresponding AUC score will be set, by convention, to $0$. We chose the AUC scores over other metrics, such as per-attribute accuracy, since they are the preferred performance measure for imbalanced data, and they are also robust against the class threshold. 
Given Definition ~\eqref{eqn:auc_l1}, we have a quantitative measure of how close the \methodacronym update $\hat{\theta}_{\epsilon}$ is to either $\theta^\star$ or $\theta^\star_{-\mathcal{S}}$ models.

For the available data, the largest gap in terms of AUC scores between $\theta^\star$ and $\theta^\star_{-\mathcal{S}}$ is achieved on the removed samples, since $\mathcal{S}$ is part of the train set for $\theta^\star$, and part of the test set for $\theta^\star_{-\mathcal{S}}$.
Thus, we define the \emph{similarity ratio} between  $\hat{\theta}_{\epsilon}$ and both $\theta^\star$ and $\theta^\star_{-\mathcal{S}}$ on the removed subset $\mathcal{S}$, through the following quantity:

\begin{definition}[Similarity ratio between $\hat{\theta}_\epsilon$ and $\theta^\star$, $\theta^\star_{\mathcal{-S}}$]
\begin{equation}
    \gamma_{\mathcal{S}} (\hat{\theta}_{\epsilon}; \theta^\star, \theta^\star_{-\mathcal{S}}) = \frac{\mathbb{D}_{\mathcal{S}}(\hat{\theta}_\epsilon, \theta^\star)}{\mathbb{D}_{\mathcal{S}}(\hat{\theta}_\epsilon, \theta^\star) + \mathbb{D}_{\mathcal{S}}(\hat{\theta}_\epsilon, \theta^\star_{-\mathcal{S}})}
\label{eqn:auc_gap}
\end{equation}
\end{definition}

Since $\gamma_{\mathcal{S}}(\hat{\theta}_\epsilon; \theta^\star, \theta^\star_{-\mathcal{S}}) \in [0, 1]$, it is clear that $\gamma_{\mathcal{S}}(\hat{\theta}_\epsilon; \theta^\star, \theta^\star_{-\mathcal{S}})=0$ iff $\hat{\theta}_{\epsilon}$ has identical AUC scores to $\theta^\star$ on the set $\mathcal{S}$; similarly, $\gamma_{\mathcal{S}}(\hat{\theta}_\epsilon; \theta^\star, \theta^\star_{-\mathcal{S}})=1$ iff $\hat{\theta}_{\epsilon}$ has the same AUC scores on $\mathcal{S}$ as $\theta^\star_{-\mathcal{S}}$. However, $\gamma_{\mathcal{S}}(\hat{\theta}_\epsilon; \theta^\star, \theta^\star_{-\mathcal{S}})\approx 0.5$ would appear, e.g., when $\theta^\star$ and $\theta^\star_{-\mathcal{S}}$ have an almost identical behavior on set $\mathcal{S}$, or if $\hat{\theta}_{\epsilon}$ diverges from both $\theta^\star$ and $\theta^\star_{-\mathcal{S}}$. Therefore, we define the \emph{maximum similarity between $\hat{\theta}_{\epsilon}$ and $\theta^\star_{\mathcal{-S}}$}
as $\gamma := \max_{\epsilon} \gamma_{\mathcal{S}}(\hat{\theta}_\epsilon; \theta^\star, \theta^\star_{-\mathcal{S}})$ and choose $\epsilon^\star$ as the value achieving this maximum.

In Section~\ref{sec:celeba-results} we show that the value of $\epsilon$ obtained from studying the similarity ratio results in \methodacronym models that are close, in terms of accuracy and loss, to retraining from scratch.

\subsection{Multinomial Classification}

For the multinomial classification setting we follow a different approach towards defining a similarity metric between \methodacronym update $\hat{\theta}_\epsilon$ and retraining from scratch $\theta^\star_{-\mathcal{S}}$. One possibility is to compare the confusion matrices for both models, computed on the set of removed samples. Intuitively, it is easier to perturb a model such that it mis-classifies a certain group of samples (e.g. simply delete the connections to a specific class), than to have it \emph{misclassify in the same way} as a reference model. With this motivation in mind, we define the \emph{confusion distance} between two models $\theta_1$ and $\theta_2$, on a subset of samples $\mathcal{C}$ as:

\begin{definition}[Confusion distance]
    \begin{equation}
        \mathbb{S}_{\mathcal{C}}(\theta_1, \theta_2) = \sum_{i,j} | m^{(i, j)}_{\mathcal{C}}(\theta_1) - m^{(i, j)}_{\mathcal{C}}(\theta_2) |
    \end{equation}
    \label{eqn:confusion_similarity}
\vspace{-\baselineskip}
\end{definition}

\noindent where by $m_{\mathcal{C}}(\theta)$ we denote the confusion matrix of model $\theta$ on samples $\mathcal{C}$. Notice that the confusion similarity is exactly the $\ell_1$ distance between the confusion matrices $m_\mathcal{C}(\theta_1)$ and $m_\mathcal{C}(\theta_2)$, viewed as vectors. 

Next, we define a measure for how close \methodacronym is to $\theta^\star_{-\mathcal{S}}$, compared to $\theta^\star$. Following the same intuition as for the similarity ratio, we define our evaluation metric on the removed samples $\mathcal{S}$. We refer to this metric as the \emph{normalized confusion distance}, which is defined as follows:
\begin{definition}[Normalized confusion distance]
    \begin{equation}
    \delta_{\mathcal{S}}(\hat{\theta}_{\epsilon}; \theta^\star, \theta^\star_{-\mathcal{S}}) = \frac{\mathbb{S}_{\mathcal{S}}(\hat{\theta}_\epsilon, \theta^\star_{-\mathcal{S}})}{\mathbb{S}_{\mathcal{S}}(\hat{\theta}_\epsilon, \theta^\star) + \mathbb{S}_{\mathcal{S}}(\hat{\theta}_\epsilon, \theta^\star_{-\mathcal{S}})}
    \label{eqn:confusion_ratio}
\end{equation}

\end{definition}

\noindent Clearly, if $\delta_\mathcal{S}(\hat{\theta}_{\epsilon}; \theta^\star, \theta^\star_{-\mathcal{S}}) < \frac{1}{2}$, then $\hat{\theta}_\epsilon$ is closer to $\theta^\star_{-\mathcal{S}}$ than to $\theta^\star$, in terms of confusion matrices. Following this definition, we choose $\epsilon$ close to the minimizer of $\delta_{\mathcal{S}}(\hat{\theta}_\epsilon; \theta^\star, \theta^\star_{-\mathcal{S}})$. This ratio only indicates how much closer to $\theta^\star_{-\mathcal{S}}$ is the (mis)-classification behavior of \methodacronym on $\mathcal{S}$, compared to $\theta^\star$. It can also be used as a method for selecting the hyper-parameter $\epsilon$, without the risk of over-fitting, by looking at the performance on the test set. However, similar to the similarity ratio defined previously, the normalized confusion distance does not provide any guarantees in terms of the quality of the resulting model. In Section~\ref{sec:awa2-results} we show that this metric is indeed consistent with other performance measures, such as accuracy, on the AwA2 and CIFAR10 datasets.

\paragraph{Discussion.} We note that both evaluation methods require access to the retrained from scratch model. Naturally, in practice the retrained model is not available when we want to use the evaluation methods to determine the hyper-parameters of \methodacronym. However, we rely on the assumption that the scale hyper-parameter $\epsilon$ is a characteristic of the empirical FIM matrix. Then, we can retrain a single model after removing a set of samples of our choice and use it in the evaluation metrics proposed to select the scale hyper-parameter.

\section{Experiments}\label{sec:experiments}

\subsection{Convex Multi-Attribute Classification}
\label{sec:celeba-exps}

 \begin{figure*}[t]
     \centering
     \hspace{-1cm}
    \begin{subfigure}[t]{0.5\textwidth}
        \includegraphics[height=1.43in]{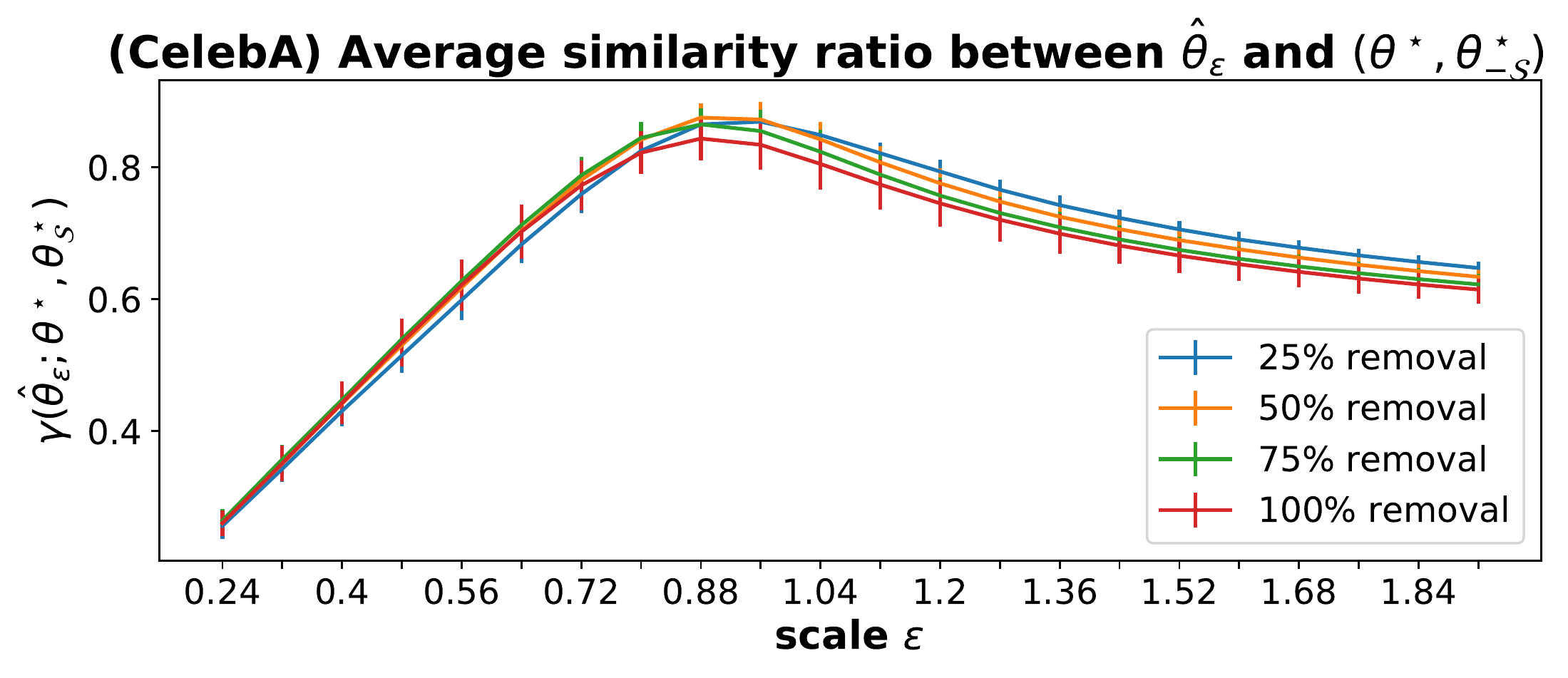}
        \caption{AUC similarity ratio}
        \label{fig:auc-gap-summary}
    \end{subfigure}%
    ~ 
    \begin{subfigure}[t]{0.5\textwidth}
        \includegraphics[height=1.43in]{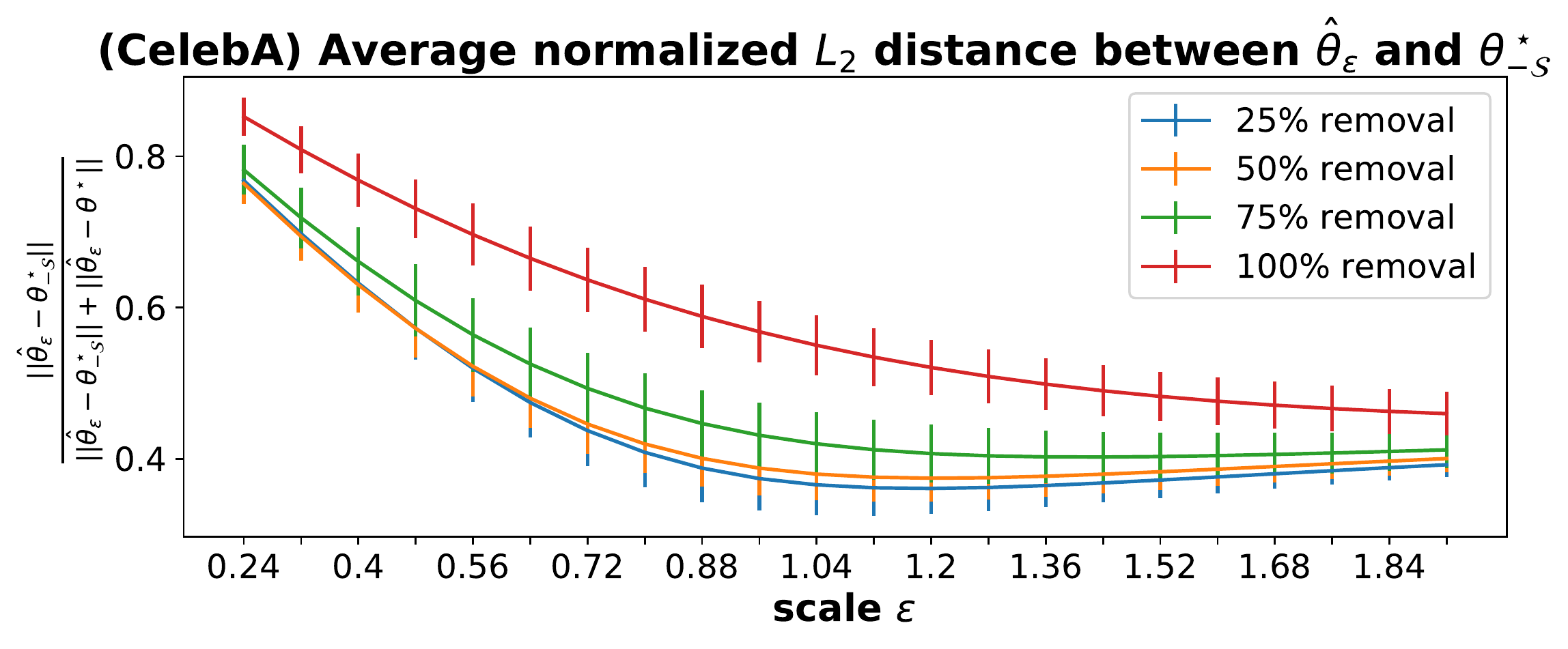}
        \caption{Normalized parameters distance}
        \label{fig:dists_gap_summary}
    \end{subfigure}%
    \label{fig:auc_dists_summary}
    \caption{\textbf{(CelebA)} The similarity ratio $\gamma_{\mathcal{S}} (\hat{\theta}; \theta^\star, \theta^\star_{-\mathcal{S}})$, together with the normalized parameters distance for $\hat{\theta}_\epsilon$ on the removed samples $\mathcal{S}$, as a function of the scaling factor $\epsilon$. The results are averaged across multiple attributes, removed at different rates.}
\end{figure*}
For our first experiment, we examine the task of erasing samples from the large-scale CelebFaces Attributes Dataset (CelebA) \cite{liu2015faceattributes}, where the samples are human faces, each with 40 binary attribute annotations. We use features obtained from a VGG-16 \cite{simonyan2014very} neural network pre-trained on the larger VGGFace dataset \cite{parkhi2015deep}, and randomly subsample $10\%$ of the train set, on which we fine-tune an $\ell_2$-regularized linear multi-attribute classifier to predict each of the 40 binary attributes. The linear model is trained without bias, 
using Stochastic Gradient Descent (SGD) with momentum and a fixed learning rate, and reaches $89.5\%$
average test accuracy across all attributes. We denote by $\theta^\star$ the minimizer of the linear model trained on the randomly sub-sampled train set. 
We perform an extensive analysis to quantify the effect of \methodacronym, by removing different percentages of rare attributes (with at most $20\%$ frequency). 

This particular setting of multiple attributes allows us to better estimate the similarity between \methodacronym and retraining from scratch, by evaluating how the predictions of the model change across the remaining attributes. Although in this case the weights corresponding to each attribute are independent given the train samples, the effect of removing a group of samples with the same chosen attribute has a non-trivial effect on the model, since the data available for the remaining attributes also changes. 

\paragraph{Choice of Hyper-parameters} \methodacronym depends on two sets of hyper-parameters: the \emph{scaling} term $\epsilon$, along with the \emph{dampening} value $\lambda$, used to ensure the empirical FIM is invertible. We fix $\lambda$ equal to the $\ell_2$ regularization coefficient; in general, we noticed that the value of $\lambda$ can influence the performance of the method. To choose the value of $\epsilon$, we perform an extensive hyper-parameter grid search and use the similarity ratio introduced in Definition~\ref{eqn:auc_gap}. We found, however, that the optimal value is fairly consistent throughout different attributes and removal rates, which suggests that $\epsilon$ is in fact a property of the empirical FIM, and is less connected to the gradients of the removed samples. We choose the scaling factor $\epsilon$ as the optimal value which maximizes the similarity ratio.  

\subsubsection{Results}
\label{sec:celeba-results}

\begin{table*}[]
\centering
\scalebox{0.8}{
\begin{tabular}{|l|l|l|l|l|l|l|l|l|l|l|l|l|l|}
\hline
Attr. & Model & \multicolumn{2}{l|}{$L_{-\mathcal{S}}(\theta)$} & 
\multicolumn{2}{l|}{$\| \nabla L_{-\mathcal{S}}(\theta)\|$} & 
\multicolumn{2}{l|}{$\mathcal{D} \setminus \mathcal{S}$ acc (\%)} & \multicolumn{2}{l|}{$\mathcal{S}$ acc (\%)} & \multicolumn{2}{l|}{$\mathcal{T}\setminus \mathcal{T}_a$ acc (\%)} & \multicolumn{2}{l|}{$\mathcal{T}_a$ acc.(\%)} \\ \hline \hline
\multicolumn{1}{|c|}{
\multirow{4}{*}{
\begin{tabular}[c]
{@{}c@{}}15
\end{tabular}}} 
&  & 100\%  & 50\%  & 100\% & 50\%  & 100\% & 50\% & 100\% & 50\%  & 100\%  & 50\%  & 100\% & 50\%  \\ \cline{2-14} \cline{2-14} 
\multicolumn{1}{|c|}{}
& $\hat{\theta}_{\epsilon}$  &  0.235  & 0.235 & 0.004 &  0.004 & 90.04  & 90.02 & \textbf{87.94}   & \textbf{89.08}  & 89.49 &  89.5  & \textbf{87.68} & \textbf{88.69} \\ \cline{2-14} 
\multicolumn{1}{|c|}{}  
& $\theta^\star_{\mathcal{-S}}$ &  0.233 & 0.234  & 0.003 & 0.003  & 90.07    & 90.05   & \textbf{87.2}   & \textbf{89.16} & 89.51  & 89.49 & \textbf{86.92} &  \textbf{88.66}       \\ \cline{2-14} 

\multicolumn{1}{|c|}{}
& $\theta^\star$ & 0.234 & 0.234 & 0.004  & 0.004 & 90.05 & 90.05 & 90.05 &  90.11 & 89.51 & 89.51 &  89.48 & 89.48 \\ \hline \hline
\multirow{4}{*}{
\begin{tabular}[c]
{@{}l@{}}35
\end{tabular}}
&  & 100\%  & 50\% & 100\% & 50\%  & 100\%  & 50\% & 100\%  & 50\%&    100\%  & 50\% & 100\%  & 50\%  \\ \cline{2-14} 
& $\hat{\theta}_{\epsilon}$  & 0.235 & 0.235 & 0.004 & 0.004 & 90.0  & 90.01 & \textbf{89.47} & \textbf{90.48} & 89.46  & 89.47 & \textbf{88.98} & \textbf{89.85} \\ \cline{2-14} 
& $\theta^\star_{-\mathcal{S}}$ & 0.234  & 0.235 & 0.003 & 0.003 & 90.03 & 90.03 & \textbf{88.86} & \textbf{90.24}  & 89.47 & 89.48 & \textbf{88.44}  & \textbf{89.7} \\ \cline{2-14} 
& $\theta^\star$ & 0.235 & 0.235 & 0.004 & 0.004 & 90.0  & 90.02 & 91.16 & 91.31 & 89.47 & 89.47 & 90.37  & 90.37  \\ \hline
\end{tabular}}
\caption{\textbf{(CelebA)} Results for the \methodacronym update $\hat{\theta}_{\epsilon}$, along with the performances of $\theta^\star$ and $\theta^\star_{-\mathcal{S}}$, on two randomly chosen rare attributes: 15 (``Eyeglasses'') and 35 (``Wearing Hat''), removed fully or partially (50\%). We report the average accuracies over all attributes, for each of the four dataset splits, along with the average loss over attributes and the gradient norm on $\mathcal{D}\setminus \mathcal{S}$. For $\epsilon$ we chose $\argmax_{\epsilon}\gamma_\mathcal{S}(\hat{\theta}_\epsilon; \theta^\star, \theta^\star_{-\mathcal{S}})$.}
\label{table:results_celeba}
\end{table*}

We compare the \methodacronym update against the gold standard of retraining from scratch without samples $\mathcal{S}$. Given the converged model $\theta^\star$ on $\mathcal{D}$, we first compute and store $\hat{F}^{-1}_\mathcal{D}(\theta^\star)$, and use it later on to compute each sample removal update. We fix the dampening $\lambda=10^{-4}$, which is equal to the $\ell_2$ regularization parameter. 
Due to the large number of parameters of the model, the computation of $\hat{F}^{-1}_\mathcal{D}(\theta^\star)$ is expensive. However, in this particular case, the matrix has a block-diagonal structure, with 40 equal blocks of size 4096, corresponding to the individual weights of each attribute. 

Once $\hat{F}^{-1}_\mathcal{D}(\theta^\star)$ is computed, we simulate the removal of different ratios of a single rare attribute by computing the \methodacronym update, while also retraining to obtain each $\theta^\star_{-\mathcal{S}}$. This procedure is repeated for all attributes that appear in at most $20\%$ of the train samples, which consist of almost half of all available attributes. We search the best value of the scale $\epsilon$ for \methodacronym using $\gamma_\mathcal{S}(\hat{\theta}_\epsilon; \theta^\star, \theta^\star_{-\mathcal{S}})$ (defined in Equation~\ref{eqn:auc_gap}) as our performance measure.

The results in Figure~\ref{fig:auc-gap-summary} show the similarity ratio $\gamma_{\mathcal{S}}(\hat{\theta}_\epsilon; \theta^\star, \theta^\star_{-\mathcal{S}})$ averaged over all chosen attributes, for each scale $\epsilon$, and for different removal percentages. The maximum average similar similarity ratio, across the chosen values of $\epsilon$, is higher than $0.8$, which corresponds to \methodacronym being more than 4 times closer, in terms of AUC scores, to $\theta^\star_{-\mathcal{S}}$ than to $\theta^\star$. Furthermore, we observe that the region for $\epsilon$ in which \methodacronym is more similar to $\theta^\star_{-\mathcal{S}}$ than to $\theta^\star$ is consistent across different attributes and removal percentages. Increasing $\epsilon$ too much past this value causes a decrease in the similarity to $\theta^\star_{-\mathcal{S}}$, which corresponds to a degradation in the performance of $\hat{\theta}_{\epsilon}$.

Although the similarity ratio $\gamma_{\mathcal{S}}(\hat{\theta}_\epsilon; \theta^\star, \theta^\star_{-\mathcal{S}})$ gives a quantitative measure for the similarity in AUC scores between $\theta^\star_{-\mathcal{S}}$ and $\hat{\theta}_{\epsilon}$, we evaluate the consistency w.r.t. other performance measures, such as train loss, gradient norm or accuracy on all four dataset splits. In Table~\ref{table:results_celeba} we present the results on two randomly chosen attributes, removed fully or partially (50\%). The \methodacronym update maintains a similar loss to the two reference models, and a small gradient.
The average accuracies for $\theta^\star_{-\mathcal{S}}$ and $\hat{\theta}$ match closely, while the performances on $\mathcal{D}\setminus \mathcal{S}$ and $\mathcal{T}\setminus\mathcal{T}_a$ are almost unchanged. For partial removal, choosing the $\epsilon$ based on the similarity ratio can result in a model that mis-classifies more aggressively the removed attribute, compared to $\theta^\star_{-\mathcal{S}}$. This behavior is also influenced by the choice of the dampening $\lambda$ and suggests a more thorough hyper-parameter tuning is required.

As the problem is convex, we can consider the Euclidean distance $\|\hat{\theta}_\epsilon - \theta^\star_{-\mathcal{S}}\|$ between the parameters of \methodacronym and the parameters of the retrained from scratch model. As can be observed in Figure~\ref{fig:dists_gap_summary}, in general the region with the highest similarity ratio with respect to $\epsilon$ is close to the one where the minimum of the normalized distance $\frac{\| \hat{\theta}_{\epsilon} - \theta^\star_{-\mathcal{S}}\|}{\|\hat{\theta}_{\epsilon} -\theta^\star\| + \| \hat{\theta}_{\epsilon} - \theta^\star_{-\mathcal{S}}\|}$ is attained. However, the peak in similarity ratio does not typically match the minimum of the normalized parameter distance, the latter being attained with slightly higher values of $\epsilon$. It is worth noting that since this is a high dimensional problem, the Euclidean distance in parameter space might not be very informative. For instance, in the case of full removal of an attribute, the normalized Euclidean distance has a value close to $0.5$, which corresponds to \methodacronym being equally close to both $\theta^\star_{-\mathcal{S}}$ and $\theta^\star$; however, Table~\ref{table:results_celeba} shows that in fact \methodacronym has a more similar behavior to $\theta^\star_{-\mathcal{S}}$.  
Although we cannot conclude that $\hat{\theta}_\epsilon$ converges exactly to $\theta^\star_{-\mathcal{S}}$, nonetheless \methodacronym yields a model that is similar in behavior to retraining from scratch.

\paragraph{Comparison with the influence function update.} We note that in this particular case of multi-attribute convex classification the inverse of the Hessian matrix can be computed in a similar way, by using rank-one updates based on the Sherman-Morrison lemma. As the theory suggests, the influence function update does not require an additional scale hyper-parameter. When compared to \methodacronym in the same setup of (partially) removing attributes, we observe that the second-order update using the Hessian $H_{\mathcal{D}}(\theta^\star)$ has slightly higher similarity ratio (close to 0.9 across different removal percentages), compared to \methodacronym. However, we noticed that in the case of full removal of an attribute, the influence function update still correctly predicts some of the removed attributes, whereas the \methodacronym has more flexibility in fully removing the attribute (in terms of positive predictions), due to the additional scaling hyper-parameter. Furthermore, we note that the maximum similarity ratio for \methodacronym is attained for values of $\epsilon$ close to 1, which indicates that the empirical FIM and the Hessian have a similar scale in this particular case.

\subsection{Convex Multinomial Classification}

We test the behavior of the proposed \methodacronym update on a multi-classification task, for the Animals with Attributes 2 (AwA2) dataset \cite{XLSA18}, which contains 37322 images, from 50 classes of animals. We use a feature representation extracted from a ResNet101 \cite{he2016deep} architecture pre-trained on ImageNet \cite{imagenet}, as described in \cite{XLSA18}, with a 70\%/10\%/20\% train/validation/test split, and train an $\ell_2$-regularized linear model on top of the 2048-dimensional features, to predict each of the 50 classes. The linear classifier optimized using SGD with momentum, at a fixed learning rate, reaches a test accuracy of $93.08\%$.
Similarly to the multi-attribute setting, we compute and store the inverse of $F^{-1}_{\mathcal{D}}(\theta^\star)$ after training the initial model $\theta^\star$, and use it to perform the \methodacronym update. We choose the dampening equal to the $\ell_2$ regularization strength, namely $\lambda=10^{-4}$. Due to the large dimension of the parameter space, we assume a block-diagonal structure for $F^{-1}_{\mathcal{D}}(\theta^\star)$  and use blocks of size at most 45000. We perform a grid search for the scale $\epsilon$ and we use the confusion distance ratio defined in Equation~\ref{eqn:confusion_ratio} to select the optimal value of $\epsilon$. In all experiments we remove samples belonging to a single class, and for statistical significance we average the results over the first ten classes of the dataset. Furthermore, we consider both the scenarios in which the class is fully or only partially removed, at different percentages, ranging from 25\% to 100\%.

\begin{figure}[t]
\centering
\begin{subfigure}[t]{0.5\textwidth}
    \centering
    \includegraphics[height=1.5in]{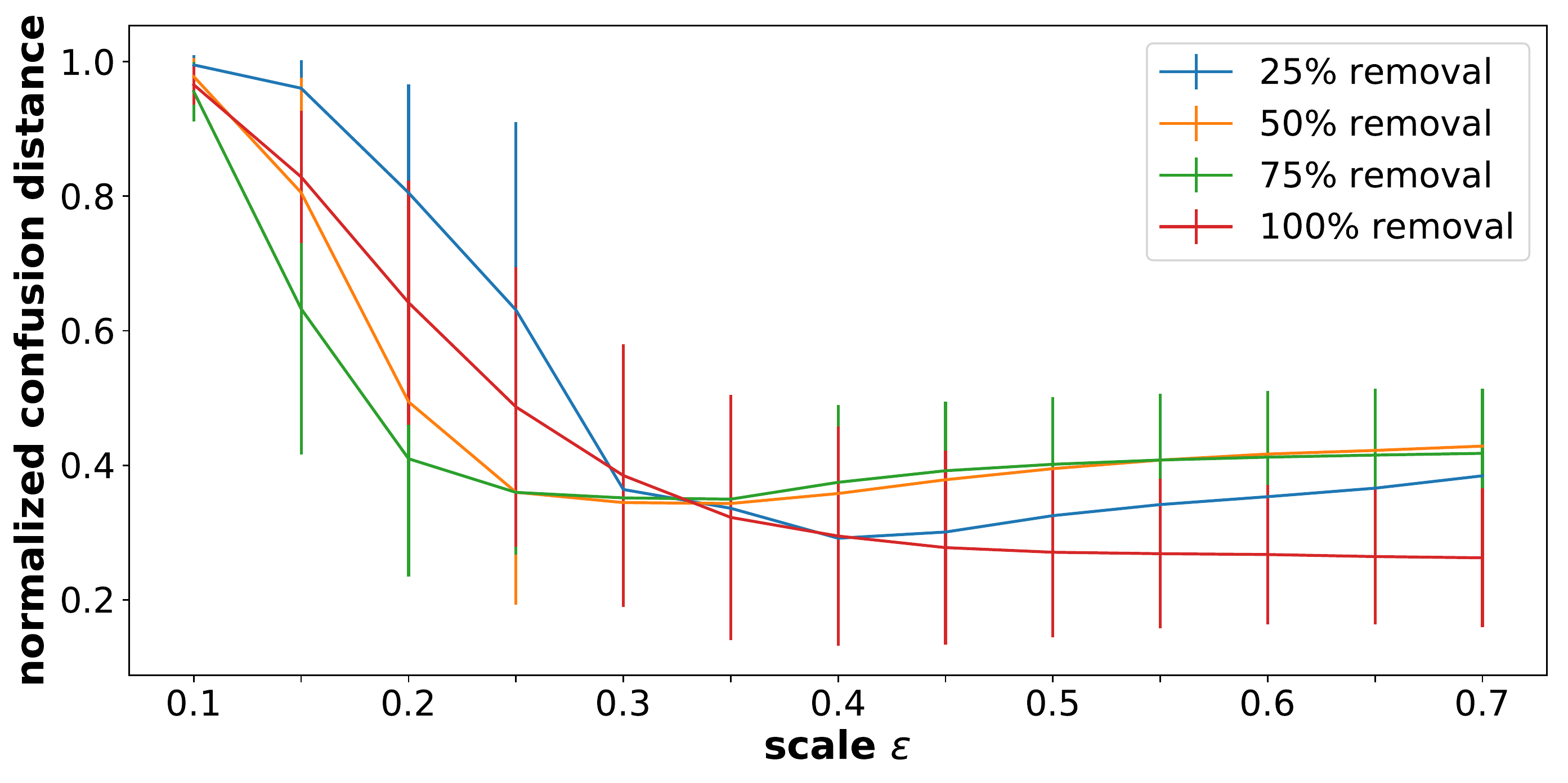}
    \caption{Normalized confusion distance on AwA2}
    \label{fig:confusion-awa2}
 \end{subfigure}%
 ~
 \begin{subfigure}[t]{0.5\textwidth}
    \centering
    \includegraphics[height=1.5in]{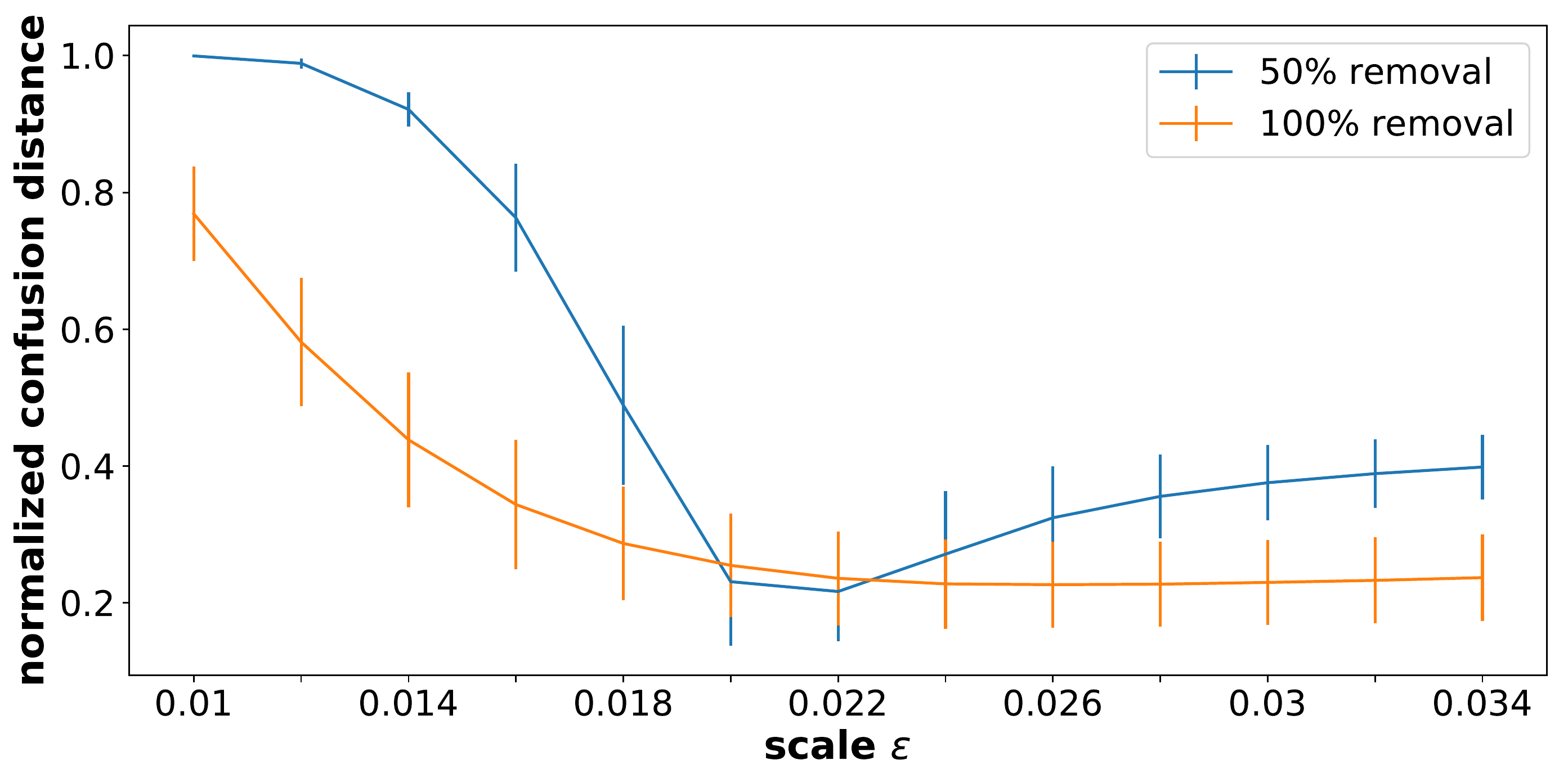}
    \caption{Normalized confusion distance on CIFAR10}
    \label{fig:confusion-cifar10}
 \end{subfigure}
 \caption{The normalized confusion distance. All samples removed in one step belong to a single class. We report the mean and standard deviation over the first 10 out of 50 classes for AwA2 and over the first 5 out of 10 classes for CIFAR10.}
 \label{fig:confusion-dist}
\end{figure}

\subsubsection{Results}
\label{sec:awa2-results}

\begin{figure*}[t]
\begin{center}
   \includegraphics[height=1.5in]{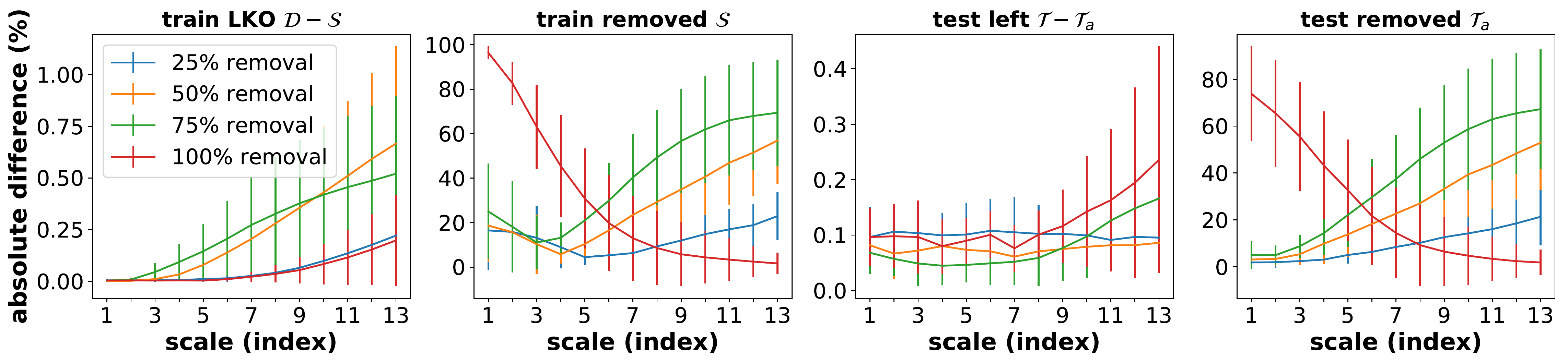}
\end{center}
   \caption{\textbf{(AwA2)} The absolute difference in accuracy between \methodacronym update $\hat{\theta}_\epsilon$ and $\theta^\star_{-\mathcal{S}}$, across all different data splits. Results are averaged across the first 10 different classes, for each considered removal percentage.} 
\label{fig:acc_diff_awa2}
\end{figure*}

Our main findings for the performance of the \methodacronym update on the AwA2 dataset are summarized in Figures~\ref{fig:confusion-awa2} and~\ref{fig:acc_diff_awa2}.
In Figure~\ref{fig:confusion-awa2} we plot the normalized confusion distance averaged across the first 10 classes from AwA2, as a function of the scale $\epsilon$, at different removal percentages. We identify a region for $\epsilon$ in which the confusion distance ratio decreases below $0.5$, which indicates that \methodacronym is closer to $\theta^\star_{-\mathcal{S}}$ than to $\theta^\star$, in terms of confusion matrices on $\mathcal{S}$. We can observe that for larger values of $\epsilon$, the confusion distance ratio plateaus or slowly converges towards a value close to $0.5$. The convergence towards $0.5$ indicates that the \methodacronym diverges away from both $\theta^\star$ and $\theta^\star_{-\mathcal{S}}$. In the case of full removal, the plateau is caused by the high dissimilarity in the confusion matrices of $\hat{\theta}_\epsilon$ and $\theta^\star$, which dominates the ratio. Therefore, a reasonable $\epsilon$ in case of full removal is one close to the start of the plateau. In general, a confusion similarity ratio close to $0.5$ indicates that either the predictions on the particular set coincide for $\theta^\star$ and $\theta^\star_{-\mathcal{S}}$, or that $\hat{\theta}_{\epsilon}$ is equally far from $\theta^\star$ and $\theta^\star_{-\mathcal{S}}$, which happens for example when the value of $\epsilon$ is too high and the model diverges. For partial removal, we can see that asymptotically $\delta_{\mathcal{S}}(\hat{\theta}_{\epsilon}; \theta^\star, \theta^\star_{-\mathcal{S}})$ approaches $0.5$ as the value of $\epsilon$ increases,
and we choose $\epsilon$ in the region corresponding to the minimum of the curve. Since the behavior $\delta_{\mathcal{S}}(\hat{\theta}_{\epsilon}; \theta^\star, \theta^\star_{-\mathcal{S}})$ gives no guarantees for the quality of the model on $\mathcal{D}\setminus\mathcal{S}$ or $\mathcal{T}$, we check its consistency w.r.t. more traditional measures of model quality, e.g. accuracy. 

Figure~\ref{fig:acc_diff_awa2} shows the absolute differences in accuracy between $\hat{\theta}_\epsilon$ and $\theta^\star_{-\mathcal{S}}$, computed on the four train and test splits. The results are averaged over the first 10 classes, for each removal percentage. The split on the test set in two different parts reveals a clearer picture of how much information of the removed samples is still preserved in the updated model. %
For each split there exists a region of $\epsilon$ in which the absolute differences in accuracy are small across all splits. A closer look at Figure~\ref{fig:confusion-awa2} shows that the values for $\epsilon$ corresponding to the minimum values of the normalized confusion distance are consistent with \methodacronym updates that result in the smallest difference in performance, compared to $\theta^\star_{-\mathcal{S}}$. However, it is worth noting that \methodacronym removes samples too aggressively in case of partial removal and that the regimes for $\epsilon$ where the \methodacronym is close to retraining from scratch are different in case of full and partial removal of a class.

\begin{figure*}
\begin{center}
    \includegraphics[height=1.5in]{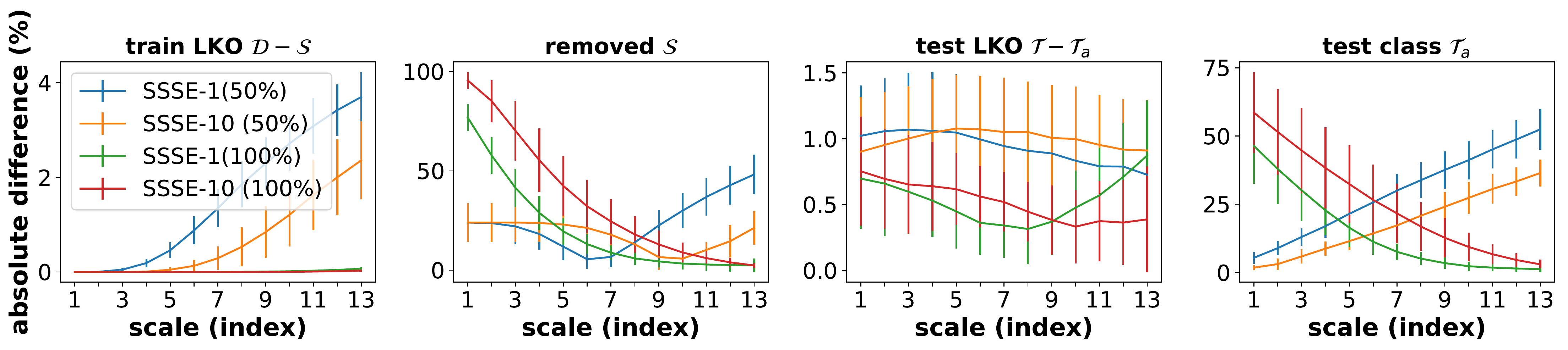}
\end{center}
\caption{\textbf{(CIFAR10)} Average absolute difference in accuracy between $\hat{\theta}_\epsilon$ and $\theta^\star_{-\mathcal{S}}$, when using different batch sizes for $\hat{F}^{-1}_{\mathcal{D}}(\theta^\star)$ in \methodacronym. Using mini-batches of 10 gradients does not have a major negative impact on the accuracy of the resulting model. Average and standard deviations over the first 5 classes are reported.}
\label{fig:ssse-batch}
\end{figure*}

\begin{figure*}[t]
\begin{center}
    \includegraphics[height=1.5in]{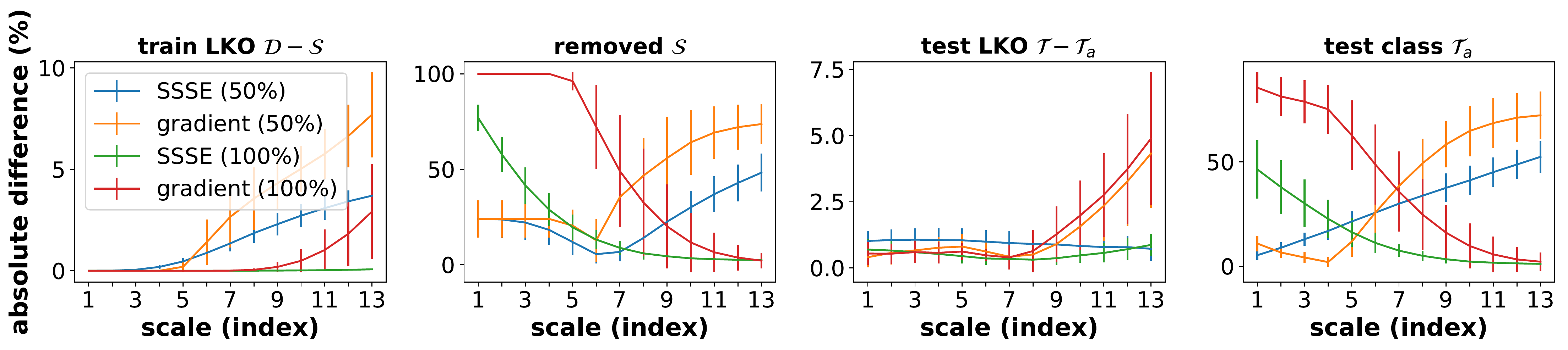}
\end{center}
\caption{\textbf{(CIFAR10)} Average absolute difference in accuracy between $\hat{\theta}_\epsilon$ and $\theta^\star_{-\mathcal{S}}$. We compare \methodacronym and a single GA step, when removing samples from the same class. The mean and standard deviation of the absolute differences in accuracy compared to $\theta^\star_{-\mathcal{S}}$ are computed for the first 5 classes.}
\label{fig:ssse-grad}
\end{figure*}

\subsection{Non-Convex Models}

Although we focused so far on convex models, \methodacronym is equally applicable in the non-convex setting. To illustrate this, we consider a ResNet20 \cite{he2016deep} architecture, trained on the CIFAR10 \cite{krizhevsky2009learning} dataset, with a 90\%--10\% train-validation split, optimized without additional data augmentation, using SGD with momentum, learning rate annealing and weight decay. We assume a block diagonal structure for the empirical FIM, and use blocks of size 70000. The value of the dampening parameter $\lambda$ was set to $0.0001$, which is different from the value of the weight decay used during training ($0.0005$). Similarly to the convex setting, we compute and store $F^{-1}_{\mathcal{D}}(\theta^\star)$ and use it to apply the \methodacronym update for each removal. We experiment with removing either all samples belonging to a single class, or a chosen percentage. 

Erasing samples from trained deep learning models is particularly difficult, as they have the capacity to effectively memorize the training set \cite{zhang2016understanding}. However, the \methodacronym update can still result in similar performance to a model trained from scratch on $\mathcal{D}\setminus \mathcal{S}$, from the same random seed as $\theta^\star$. In Figure~\ref{fig:ssse-batch} we show the absolute differences between the accuracy of \methodacronym and of $\theta^\star_{-\mathcal{S}}$, computed on the train and test splits of the available dataset. The results, as a function of the scale $\epsilon$, are averaged over the first five classes of CIFAR10, and for each class we consider full or partial (50\%) removal. For completeness, we also show the normalized confusion distance for full and partial (50\%) removal in Figure~\ref{fig:confusion-cifar10}, which confirms that the value of $\epsilon$ corresponding to the minimum average confusion distance ratio results in \methodacronym updates with the closest behavior in terms of accuracy on the different data splits (please see Figure~\ref{fig:ssse-batch}). To improve the efficiency of \methodacronym, we also use an approximation of the FIM, in which we take mini-batches of 10 gradients, instead of single sample gradients. Such approximations have been also recently used for neural network compression \cite{singh2020woodfisher}. With the appropriate scale $\epsilon$, both methods can achieve similar performance. For completeness, we also included in Figure~\ref{fig:ssse-grad} a comparison between \methodacronym and a single gradient ascent (GA) step on the removed samples; with an appropriate learning rate, GA can result in reasonable models, but not of the same quality as \methodacronym. 

We note that the task of erasing all samples from a given class is in fact easier than removing only a subset; the update can focus on perturbing, for example, mostly the weights in the final classification layer. For partial removal, the value of $\epsilon$ is more sensitive, and even in the ``optimal'' region we noticed that the \methodacronym tends to mis-classify the removed class samples more aggressively than $\theta^\star_{-\mathcal{S}}$. A similar effect was also noticed in the experiments with convex models. 
One explanation for this phenomenon would be related to the use of the gradient only on the removed samples $\mathcal{S}$ inside the \methodacronym update. Since the samples to be removed are selected randomly within a class, their gradient could be seen as an unbiased estimator for the gradient on all samples from that class. This, together with the fact that the FIM is computed on the entire training set, suggest that the update is oblivious to the fact that there are still some samples left in the training set that belong to the chosen class. Thus, \methodacronym would in this case have a similar behavior as to when all samples from the class are removed. We noticed that using inside \methodacronym the gradient on the leave-k-out training set $\mathcal{D}\setminus \mathcal{S}$ instead of only on $\mathcal{S}$ had a slight positive effect. Namely, we observed a decrease in the absolute difference in accuracy between $\hat{\theta}_\epsilon$ and $\theta^\star_{-\mathcal{S}}$ on both the removed samples $\mathcal{S}$ and the test samples belonging to the erased class $\mathcal{T}_a$. Therefore, we believe that improvements could be made to \methodacronym, to induce more aggressive perturbations in the higher layers. 

Finally, we note that the method closest to ours is \cite{golatkar2020eternal}, where the authors propose adding a noise term and instead use only the diagonal of the Fisher matrix. In  Figure~\ref{fig:ssse_soatto} we compare the results of our update step to the method from~\cite{golatkar2020eternal}. Contrary to what was reported in the original work, for us the noise-based update was not able to achieve good sample erasure at the same time as preserving the original model's accuracy. One potential explanation would be that the off-diagonal terms of the Fisher matrix carry significant information which cannot be captured only by using the diagonal elements.

\begin{figure}[t]
\begin{center}
   \includegraphics[height=1.45in]{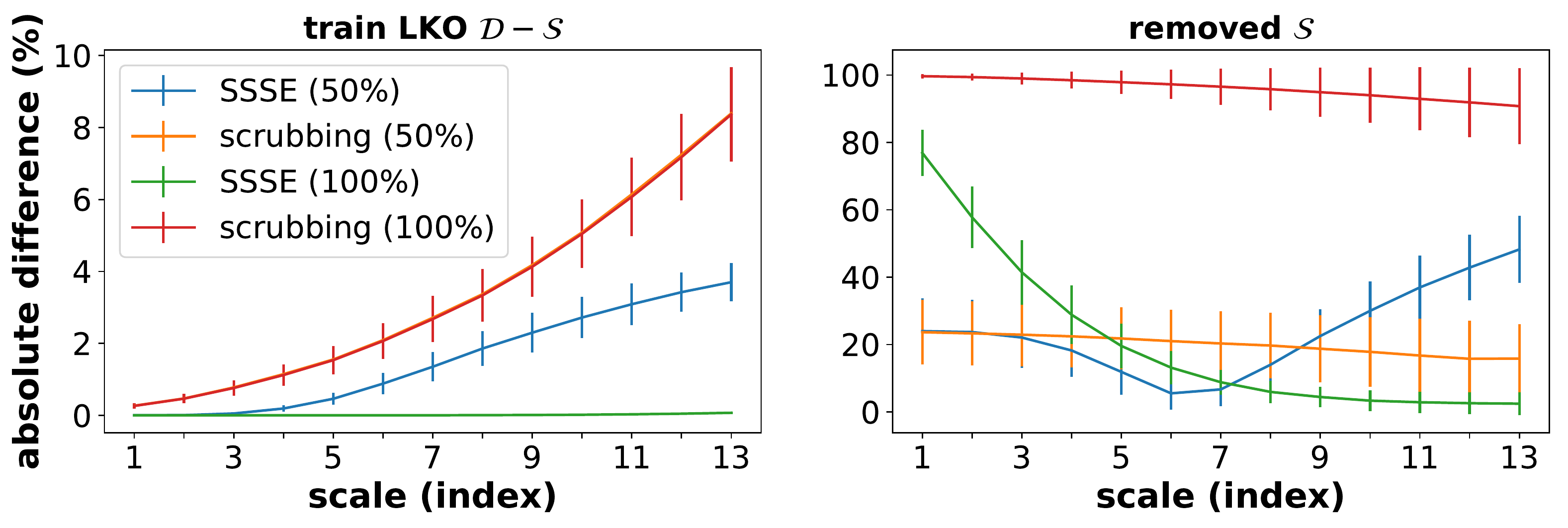}
\end{center}
   \caption{\textbf{(CIFAR10)} Comparison between \methodacronym and the scrubbing update from \cite{golatkar2020eternal}. We show the absolute differences in accuracy between each update and $\theta^\star_{-\mathcal{S}}$, on $\mathcal{D}\setminus\mathcal{S}$ and $\mathcal{S}$, for the task of removing samples from the same class, fully or partially (50\%). We repeat the experiment for the first 5 classes from CIFAR10 and report average and standard deviation of the results.}
\label{fig:ssse_soatto}
\end{figure}

\section{Discussion}

In this work we proposed \methodacronym, a method for erasing samples from a trained model. 
It is inspired by the concept of influence functions and made efficient through the use of the Fisher information matrix in combination with efficient low-rank matrix updates instead of an intractable Hessian matrix. 
Subsequently, we proposed two new similarity measures to evaluate the performance of our method compared to the gold standard of retraining from scratch, and we presented results for two convex classification problems, as well as an extension to the non-convex case. 

Our results demonstrate that influence-based model updates are not just theoretically a good idea for samples erasure, but that, with the right numerical tools, they can actually be made practical. 
We hope that this insight will inspire other researchers to build on our work and practitioners to add influence-based sample erasure to their toolboxes. 

\section*{Acknowledgements}
This project has received funding from the European Research Council (ERC) under the European Union's Horizon 2020 research and innovation programme (grant agreement No 805223 ScaleML).

{\small
\bibliographystyle{alpha}
\bibliography{papers}
}

\end{document}